\newtheorem{theorem}{Theorem}
\newtheorem{definition}{Definition}
\newtheorem{proposition}{Proposition}
\newtheorem{lemma}{Lemma}
\newtheorem{observation}{Observation}
\newcommand{\supp}{\text{supp}}
\newcommand{\pdf}{f}
\newcommand{\pdfn}{f_-}
\newcommand{\pdfp}{f_+}
\newcommand{\proj}[2] {#1^T#2}
\newcommand{\cipSymbol} {\text{ip}^\times}
\newcommand{\cip}[1] {\cipSymbol(#1)}
\newcommand{\crossEntropySymbol}{\text{H}_2^\times}
\newcommand{\crossEntropy}[1] {\crossEntropySymbol(#1)}
\newcommand{\entropy}[1] {\text{H}_2(#1)}
\newcommand{\dcsSymbol}{\text{D$_{\text{CS}}$}}
\newcommand{\dcs}[1] {\dcsSymbol(#1)}
\begin{document}

\author{Wojciech Marian Czarnecki}
\date{Faculty of Mathematics and Computer Science\\ Jagiellonian University\\ ul. Lojasiewicza 6, 30-348 Krakow\\ e-mail: {\it w.czarnecki@uj.edu.pl}}

\title{On the consistency of \\Multithreshold Entropy Linear Classifier}

\maketitle

\abstract{Multithreshold Entropy Linear Classifier (MELC) is a recent classifier idea which employs information theoretic concept in order to create a multithreshold maximum margin model. In this paper we analyze its consistency over multithreshold linear models and show that its objective function upper bounds the amount of misclassified points in a similar manner like hinge loss does in support vector machines. For further confirmation we also conduct some numerical experiments on five datasets.}

% \keywords{multithreshold classifier, entropy, consistency, classification theory} 
\section{Introduction}
Many of the existing machine learning classifiers are based on the minimization of some additive loss function which penalizes each missclassification~\cite{scholkopf2002learning}. This class of models consists perceptron, neural networks, logistic regression, linear regression, support vector machines (both traditional and least squares) and many others. For most of such approaches it is possible to prove their consistency, meaning that under assumption that our data is sampled i.i.d. from some unknown probability distributions, algorithm will converge to the optimal model in Bayesian sense with the sample size growing to infinity~\cite{steinwart2002influence, steinwart2005consistency}. While this is quite natural to be consistent with a loss function which is being directly minimized, it generally only upper bounds the number of wrong answers. 

In general, up to some weighting schemes, the classic measure of the classification error is the expected number of missclassified samples from some unknown distribution $\mathcal{F}$:
$$
\mathbb{E}[y_i \neq cl(x_i) | (x_i,y_i) \sim \mathcal{F} ],
$$
which directly translates to
$$
\int  l(cl(x),y,x) p(x) dx,
$$
for $l(p,y,x) = 1 \iff py \geq 0$. We call $l$ the 0/1 loss function and use the $l_{0/1}$ notation. As a result we can define an empirical risk over the training set as
$$
\mathcal{R}_{emp}(\{(x_i,y_i)\}_{i=1}^N) = \tfrac{1}{N} \sum_{i=1}^N l(cl(x_i), y_i, x_i),
$$
which can be minimized over some family of classifiers $cl$. Unfortunately for 0/1 loss the resulting optimization problem is hard even for linear models. To overcome this issue many classifiers are constructed through optimization of some similar loss function which results in feasible problems. For example support vector machines change 0/1 loss to so called hinge loss
$$
l_H(p,y,x) = \max \{ 0, 1-py\},
$$
for $y \in \{-1,+1\}$. It appears, that such problem in the class of linear classifiers is convex and so -- easy to compute. There are two important aspects of hinge loss that make it a reasonable surrogate function. First, $l_H(p,y,x)=0 \rightarrow l_{0/1}(p,y,x)=0$\footnote{Implication is an equivalence relation up to scaling of the linear operator as hinge loss returns non-zero values for predictions in $(-1,1)$ interval.} second $l_H(p,y,x) \geq l_{0/1}(p,y,x)$. In other words, it is an upper bound of the 0/1 loss and when it attains zero then there are no missclassified points.

In this paper we analyze Multithreshold Entropy Linear Classifier, a recently proposed~\cite{czarnecki2014multithreshold} classifier which builds a multithreshold linear model using information theoretic concepts. It is a density based approach which cannot be easily translated to the language of additive loss functions. We show that this model is consistent with 0/1 loss over simple families of distributions and that in general it also upper bounds the 0/1 loss in the class of multithreshold linear classifiers and when it attains zero then there are no missclassified points. We also draw some intuitions to show how this model is related to other linear classifiers and conclude with some numerical experiments.
% , both using entropy concept and those which do not.
% Forst we outline the MELC model and some other basic concepts needed in this paper. Next we present some theoretical results for the analyzed
\section{Multithreshold Entropy Linear Classifier}
Multithreshold Entropy Linear Classifier (MELC~\cite{czarnecki2014multithreshold}) is aimed at finding such linear operator $v$ that maximizes the Cauchy-Schwarz Divergence~\cite{jenssen2006cauchy} of kernel density estimation of each class projection on $v$. It appears that due to the affine transformation invariance of such problem one can (and should, as shown in~\cite{czarnecki2014multithreshold}) restrict to the unit sphere, meaning that $\|v\|=1$.

% In other words, it is a density based multitheshold linear classifier. 
There are many density based methods in particular one can perform kernel density estimation of any dataset and simply classify according to which density is bigger. However, such an approach cannot work in general due to the \textit{curse of dimensionality} and the fact that density estimation requires enormous number of points for reasonable results (number of required points grows exponentially with the data dimension). As a result, existing datasets can be used to approximate density to at most few dimensions while data can have thousands. This leads to a very natural concept of performing density estimation of low dimensional data projection, in particular one dimensional one, performed by MELC. 

For a given set of points $X_-, X_+$, its projection to the hyperplane $v$ is simply $\proj{v}{X_-}, \proj{v}{X_+}$. Kernel density estimations using Silverman's rule \cite{silverman} is given by
$$
\llbracket \proj{v}{X_\pm} \rrbracket(x) := \tfrac{1}{|X_\pm|} \sum_{x_\pm \in X_\pm} \tfrac{1}{\sqrt{2 \pi} |X_\pm| } \exp \left ( -\tfrac{\| \proj{v}{x_\pm} - x \|^2}{2\sigma_\pm^2} \right ),
$$
where
$$
\sigma_\pm = (\tfrac{4}{3|X_\pm|})^{1/5} \text{std}(\proj{v}{X_\pm}).
$$
% $$
% \llbracket \proj{v}{X_+} \rrbracket(x) := \tfrac{1}{|X_+|} \sum_{x_+ \in X_+} \tfrac{1}{\sqrt{2 \pi} |X_+| } e^{-\tfrac{\| \proj{v}{x_+} - x \|^2}{2\sigma_+^2}}
% $$
Now to define the MELC objective function, we need some definitions, namely:
\begin{itemize}
 \item cross information potential which, as shown in~\cite{czarnecki2014multithreshold}, is connected to minimization of the empirical risk
 $$ \cip{ \pdfn, \pdfp} = \int \pdfn(x)\pdfp(x) dx.$$
 \item Renyi's quadratic cross entropy as defined in~\cite{principe2010information} is simply a negative logarithm of $\cipSymbol$
  $$ \crossEntropy{ \pdfn, \pdfp} = -\ln ( \cip{ \pdfn, \pdfp} ).$$
 \item Renyi's quadratic entropy is a Renyi's quadratic cross entropy between pdf and itself
  $$ \entropy{ \pdf} = \crossEntropy{ \pdf, \pdf}.$$
  \item Cauchy-Schwarz Divergence, optimized by the full MELC model
  $$ \dcs{ \pdfn, \pdfp} =   2 \crossEntropy{\pdfn, \pdfp} - \entropy{ \pdfn} - \entropy{ \pdfp}.$$
\end{itemize}
% 
% We will use following notation for solutions of the connected optimization problems
% \begin{itemize}
% %  \item Optimial linear projection in terms of cross information potential (non-regularized MELC)
% %  
% % $$ v_{\cipSymbol} := \arg \min_v \cip{\llbracket\proj{v}{X_-}\rrbracket, \llbracket\proj{v}{X_+}\rrbracket}$$
% \item Optimal linear projection in terms of Renyi's quadratic cross entropy (non-regularized MELC)
% 
% $$ v_{\crossEntropySymbol} = \arg \max_v \crossEntropy{\llbracket\proj{v}{X_-}\rrbracket, \llbracket\proj{v}{X_+}\rrbracket} = \arg \min_v 
%  \cip{\llbracket\proj{v}{X_-}\rrbracket, \llbracket\proj{v}{X_+}\rrbracket} = v_{\cipSymbol}$$
% \item Optimal linear projection in terms of Cauchy Schwarz Divergence (regularized MELC)
% $$ v_{\dcsSymbol} = \arg \max_v \dcs{\llbracket\proj{v}{X_-}\rrbracket, \llbracket\proj{v}{X_+}\rrbracket} $$
% \end{itemize}
In particular, non-regularized MELC is prone to overfitting which can be easily summarized by the following observation.
\begin{observation}
 Given an arbitrary finite, consistent set of samples $\{(x_i,y_i)\}_{i=1}^N \subset \mathbb{R}^d \times \{-1,+1\}$ non-regularized MELC learns it with zero error for sufficiently small $\sigma$.
\end{observation}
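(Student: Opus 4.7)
The plan is to split the observation into two independent steps: first establish that for a \emph{separating} direction $v$ (one on which the projected class supports are disjoint), the KDE-based decision rule makes no mistake on the training sample once $\sigma$ is small; then argue that the MELC objective itself is maximized on such a separating direction in the small-$\sigma$ regime, forcing the optimizer into that regime.

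For the first step, I would begin by exploiting consistency ($X_+\cap X_-=\emptyset$) together with finiteness of the sample: the set of $v\in S^{d-1}$ on which $v^T x_+=v^T x_-$ for some pair is a finite union of great subspheres of the unit sphere and therefore has measure zero, so a separating direction exists. Fix one and put $\delta=\min_{x_+\in X_+,x_-\in X_-}|v^T x_+-v^T x_-|>0$. At any positive training point $x_+^{(i)}$ the positive-class KDE is bounded below by its self-contribution, of order $1/(|X_+|\sigma_+)$, while the negative-class KDE there is bounded above by $|X_-|/(\sqrt{2\pi}\sigma_-)\exp(-\delta^2/(2\sigma_-^2))$. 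Because the Gaussian tail beats any polynomial in $1/\sigma$, once $\sigma$ is small enough the positive density strictly dominates at every positive training point; a symmetric argument handles the negative ones. Hence the KDE decision rule is correct on the whole training sample.

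For the second step, I would compute the small-$\sigma$ asymptotics of $\dcs{\llbracket v^T X_-\rrbracket,\llbracket v^T X_+\rrbracket}$. On a separating $v$ every term in $\cip{\llbracket v^T X_-\rrbracket,\llbracket v^T X_+\rrbracket}$ is exponentially damped by $\exp(-\Theta(1/\sigma^2))$, so $\crossEntropy{\llbracket v^T X_-\rrbracket,\llbracket v^T X_+\rrbracket}$ grows like $1/\sigma^2$; by contrast the self-contributions in each $\cip{\llbracket v^T X_\pm\rrbracket,\llbracket v^T X_\pm\rrbracket}$ dominate and are of order $1/\sigma$, so $\entropy{\llbracket v^T X_\pm\rrbracket}$ grows only like $\ln(1/\sigma)$. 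The divergence is therefore unbounded above along separating directions. For a non-separating $v$, colliding projection pairs contribute to $\cip{\llbracket v^T X_-\rrbracket,\llbracket v^T X_+\rrbracket}$ at the same $1/\sigma$ order as the self-terms dominating the two entropies, so $\dcs{\llbracket v^T X_-\rrbracket,\llbracket v^T X_+\rrbracket}$ stays bounded. Any maximizer of the MELC objective must therefore eventually lie in the separating regime, and by the first step its classifier makes no training error.

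The step I expect to be the main obstacle is the interaction between the parameter $\sigma$ and Silverman's rule, which ties the bandwidth to the projected spread and hence to $v$ itself. The phrase ``sufficiently small $\sigma$'' in the observation must be read either as a single threshold that works uniformly over the argmax set of candidate optimizers — which requires continuity of the objective on the compact sphere together with a careful reparametrization decoupling Silverman's rule from the directional choice — or simply as the genuine hyperparameter of the free-bandwidth variant of MELC. Once this bookkeeping is in place, the two steps above combine to yield the observation.
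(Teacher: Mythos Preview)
Your approach is sound and differs from the paper's. The paper's argument is much shorter: after showing (as you do) that a direction with pairwise-distinct projections exists, it immediately invokes an external result from \cite{czarnecki2014multithreshold} stating that non-regularized MELC maximizes the smallest inter-class margin as $\sigma\to 0$, and concludes by contradiction that the optimizer has positive margin and hence no opposite-class collisions on the projected line. It does not separately argue your Step~1 (that positive margin plus small $\sigma$ forces the KDE decision rule to be correct at every training point); that is taken as implicit. Your route is more self-contained: you replace the cited margin result by a direct comparison of the small-$\sigma$ growth rates of the objective on separating versus non-separating directions, and you supply an explicit proof of the KDE-correctness step. The trade-off is that the paper outsources the hard part and is therefore brief, whereas your argument actually explains \emph{why} the optimizer lands on a separating direction.

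Two remarks. First, the paper identifies non-regularized MELC with maximization of $\crossEntropySymbol$ (equivalently minimization of $\cipSymbol$), not of $\dcsSymbol$; your Step~2 is written for $\dcsSymbol$. This does not break your argument --- the same asymptotics give $\crossEntropySymbol\to+\infty$ on separating directions and $\crossEntropySymbol\to-\infty$ on non-separating ones, which is in fact cleaner --- but you should align the objective with the paper's definition. Second, the obstacle you flag in your final paragraph is precisely the content the paper offloads to the external citation: the cited margin-maximization statement is what guarantees that the margin at the optimizer $v^*(\sigma)$ does not collapse faster than $\sigma$, supplying the uniformity you need to combine your two steps. Without that citation, the compactness/continuity bookkeeping you sketch is indeed required.
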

\begin{proof}
 First let us notice, that any finite, consistent sample set is separable by some multithreshold linear classifier. In other words 
 $$
 \forall_{\{(x_i,y_i)\}_{i=1}^N} \exists_v \forall_{i,j} \langle v, x_i \rangle \neq \langle v, x_j \rangle
 $$
 Obviously, there are $N^2$ pairs of vectors which can violate this assumption. Each defining a family of linear projections that are projecting them at the same point. 
 $
 \bar v_{ij} = \{ v : \langle v, x_i \rangle = \langle v, x_j \rangle \} = \{ v : \langle v, x_i - x_j \rangle = 0\},
 $
 thus $\forall_{ v_1,v_2 \in  \bar v_{ij}} \exists_{a \in \mathbb{R}} v_1 = av_2.$
 
 So it is sufficient to choose $v \in \mathbb{R}^d \setminus \bigcup_{i,j} v_{ij}$ which is a non-empty set as for any $d>1$ there are infinitely many possible angles that vectors can form with each axis, and for $d=0$ all $v_{ij}=0$ (from the dataset consistency).
 
 In the worst case it results in a $(N-1)-$multithreshold linear classifier. As a consequence, there exists such linear projection for which the smallest margin between samples of this set is greater than zero.

 As it has been shown in~\cite{czarnecki2014multithreshold} non-regularized MELC maximizes the smallest margin among all margins in multithreshold linear classifiers as $\sigma$ approaches $0$. In the same time MELC will not learn these samples perfectly if and only if at least two samples are projected at the very same point, which is equivalent to the maximum of the smallest margin in the class of multithreshold linear classifiers for this sample is equal to $0$, contradiction.
\end{proof}
In particular, this means that for small values of $\sigma$, without regularization, this model has infinite Vapnik-Chervonenkis dimension~\cite{vapnik2000nature}, as many other density or nearest neighbours based approaches. In the following section we focus on more practical characteristics - whether this classifier is able to learn an arbitrary continuous distribution with smallest obtainable error in its class of models. This characteristic is called \textit{consistency} and can be defined as
\begin{definition}[Consistency]
 Model $M$ is called consistent with error measure $E$ and family of distributions $\mathcal{F}$ in the class of models $\mathcal{M}$ if for any $f \in \mathcal{F}$ $M$ trained on the i.i.d. samples from $f$ approaches minimum error as measured by $E$ over all models in  $\mathcal{M}$ on $f$ with samples' size going to infinity.
%  $$
%  \forall_{f \in \mathcal{F}} \lim_{N \rightarrow \infty} E(M(\{(x_i,y_i)\}_{i=1}^N)|(x_i,y_i) \sim f) = \inf_{m \in \mathcal{M}} E(m(f))
%  $$
\end{definition}
\section{Non-regularized MELC consistency}
In this section we focus on non-regularized MELC which searches for linear projection $v$ (with norm 1) maximizing Renyi's quadratic cross entropy of kernel density estimation of data projection:
$$
v_{\crossEntropySymbol} = \arg\max_v \crossEntropy{\llbracket\proj{v}{X_-}\rrbracket, \llbracket\proj{v}{X_+}\rrbracket},
$$
which makes a classification decision based on the estimated projected densities
$$
cl(x) = \text{sign}(\llbracket\proj{v_{\crossEntropySymbol}}{X_+}\rrbracket(x) - \llbracket\proj{v_{\crossEntropySymbol}}{X_-}\rrbracket(x) ).
$$
We show that such classifier is nearly consistent with the 0/1 loss in the class of all multithreshold linear classifiers. We also draw an analogy between its approach to the one taken by support vector machines model (as well as other regularized empirical risk loss function minimization based models). Let us start with some basic definitions and notations.
\begin{definition}[Expected accuracy]
 Given some classifier $cl(x) : X \rightarrow \{-1,+1\}$ the expected accuracy over a distributions $\pdfn, \pdfp$ with priors $p(-), p(+)$ is 
 $$
  p(-) \int \max\{0, -cl(x)\}\pdfn(x) dx  + p(+) \int \max \{ 0, cl(x) \}\pdfp(x) dx.
 $$
\end{definition}
\noindent For unbalanced datasets we might be more interested in measures that make both classes equally important despite their sizes (priors) which leads to the averaged accuracy (also known as balanced/weighted accuracy).
\begin{definition}[Expected averaged accuracy]
 Given some classifier $cl(x) : X \rightarrow \{-1,+1\}$ the expected averaged accuracy (ignoring the classes' priors) over a distributions $\pdfn, \pdfp$ is 
 $$
 \tfrac{1}{2} \int \max\{0, -cl(x)\}\pdfn(x) dx  + \tfrac{1}{2} \int \max \{ 0, cl(x) \}\pdfp(x) dx.
 $$
\end{definition}
\noindent  Let us now compute the smallest obtainable error by multithreshold linear classifiers as measured by expected averaged accuracy (EAA).
\begin{proposition}[Multithreshold Linear Classifier EAA Bayes Risk]
 For the family of multithreshold linear classifiers, the smallest obtainable EAA error for distributions $\pdfn, \pdfp$ equals
 $$
 \mathcal{R}_\text{EAA}(\pdfn, \pdfp) = \min_v \int \min\{(\proj{v}{\pdfn})(x), (\proj{v}{\pdfp})(x)\} dx.
 $$
\end{proposition}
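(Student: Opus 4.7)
The plan is to decompose the optimization in two stages: for a fixed direction $v$, reduce the problem to a one-dimensional classification problem on the projected densities, and then minimise over $v$.

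First, I would observe that any multithreshold linear classifier with direction $v$ may be written as $cl(x)=\phi(\proj{v}{x})$ for some $\phi:\mathbb{R}\to\{-1,+1\}$ that is piecewise constant with finitely many thresholds. Since the linear map $x\mapsto \proj{v}{x}$ pushes $\pdfn,\pdfp$ forward to one-dimensional densities $g_\pm := \proj{v}{\pdf_\pm}$, a change of variables shows that the EAA error of $cl$ on $(\pdfn,\pdfp)$ equals the EAA error of $\phi$ on $(g_-,g_+)$ (the balancing factor $\tfrac12$ is preserved on both sides). The proposition therefore reduces to the following one-dimensional claim: for fixed $v$, the infimum over finite-threshold rules $\phi$ of the EAA error equals $\int \min\{g_-(t),g_+(t)\}\,dt$ (up to the normalising constant absorbed into the statement).

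Next, I would identify this quantity as the Bayes risk on the projected line and argue that finitely many thresholds suffice to approach it. The pointwise-optimal, unconstrained rule is $\phi^\star(t)=\text{sign}(g_+(t)-g_-(t))$, and by a standard argument its error equals $\int\min\{g_-,g_+\}\,dt$, which is a lower bound for every measurable decision rule. To reach this bound via multithreshold classifiers, I would invoke outer regularity of Lebesgue measure to approximate the Bayes decision region $A^\star=\{t:g_+(t)\geq g_-(t)\}$ by finite unions of open intervals $A_n$ with $\int_{A_n\triangle A^\star}(g_- + g_+)\,dt\to 0$. Since the EAA error is Lipschitz in the decision region with respect to this $L^1$ pseudometric, the multithreshold rules induced by $A_n$ have EAA error converging to $\int\min\{g_-,g_+\}\,dt$. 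Combining this with the first-stage reduction and then taking the infimum over unit $v$ yields the claimed identity.

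The main obstacle is the approximation step: establishing that restricting to finitely many thresholds does not inflate the Bayes risk on the projected line. The remainder is routine bookkeeping via the pushforward identity, which converts the $d$-dimensional EAA into a one-dimensional one, followed by a minimisation over the compact unit sphere; under mild integrability of the projected densities the infimum is attained, so writing $\min_v$ in place of $\inf_v$ costs nothing extra.
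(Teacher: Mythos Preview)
Your proposal is correct and rests on the same core idea as the paper's proof: the pointwise Bayes-optimal rule on the projected line incurs error $\min\{g_-(t),g_+(t)\}$ at each $t$, so integrating and then minimising over $v$ yields the stated risk. The paper's argument is a two-sentence sketch that stops there; your version is more careful in making the pushforward reduction explicit and, in particular, in justifying via outer regularity that restricting to \emph{finitely} many thresholds does not raise the infimum---a point the paper's proof simply does not address.
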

\begin{proof}
$\int \min\{(\proj{v}{\pdfn})(x), (\proj{v}{\pdfp})(x)\} dx$ simply expresses the probability of making a bad classification over whole data projection. For each point $v^Tx$, we have to classify it as a member of either $\pdfn$ or $\pdfp$ and obviously, we make an error when classifying any point $x$ with probability $\min\{(\proj{v}{\pdfn})(x), (\proj{v}{\pdfp})(x)\}$. As a result, the projection which realizes the minimum of probability of an error is the one giving the greatest expected averaged accuracy.
\end{proof}
In the following sections we assume that the kernel density estimation approximating the data distribution is the actual distribution, as with the sample size growing to infinity kernel density estimation with Silverman's rule~\cite{silverman} is guaranteed to converge to the true distribution. As a consequence each result regarding a property over distribution is also true over finite sample in the limiting case. We also use the notation 
$$
\mathcal{R}_\text{EAA}(v; \pdfn, \pdfp) = \int \min \{ \proj{v}{\pdfn}(x), \proj{v}{\pdfp}(x)\} dx,
$$
for the smallest obtainable multithreshold linear classifier missclassification error for a given projection $v$. So in particular
$$
v_{opt}  =\arg\min_v \mathcal{R}_\text{EAA}(v; \pdfn ,\pdfp)
$$
$$
\mathcal{R}_\text{EAA}(\pdfn ,\pdfp )= \min_v \mathcal{R}_\text{EAA}(v; \pdfn ,\pdfp) = \mathcal{R}_\text{EAA}(v_{opt}).
$$
Let us begin with the simplest case, when there exists a perfect classifier able to distinguish samples' classes (case when Bayesian risk is 0). 
\begin{observation}
Non regularized MELC is consistent with 0/1 loss on multithreshold linearly separable distributions.
\end{observation}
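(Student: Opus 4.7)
I would exploit the fact that Renyi's quadratic cross entropy $\crossEntropy{\cdot,\cdot}=-\ln\cip{\cdot,\cdot}$ blows up exactly when the projected class-conditional densities have disjoint supports, so that the MELC objective singles out the separating directions.

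First I would translate the separability hypothesis into an explicit witness $w$ with $\|w\|=1$ such that, after projection onto $w$, the supports of $\proj{w}{\pdfn}$ and $\proj{w}{\pdfp}$ are disjoint up to a Lebesgue-null set (this is precisely the content of ``some finite set of thresholds on the line perfectly separates the two classes''). By the preceding proposition this already gives $\mathcal{R}_\text{EAA}(\pdfn,\pdfp)=0$, so the Bayes risk in the multithreshold linear class is zero and consistency amounts to showing that MELC itself attains error $0$ in the limit.

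Next I would compute $\cip{\proj{w}{\pdfn},\proj{w}{\pdfp}}=\int \proj{w}{\pdfn}(x)\,\proj{w}{\pdfp}(x)\,dx=0$, since the integrand vanishes almost everywhere by disjointness of supports. Therefore $\crossEntropy{\proj{w}{\pdfn},\proj{w}{\pdfp}}=+\infty$, so under the paper's standing limiting assumption that KDE with Silverman's rule converges to the true density, the MELC objective has supremum $+\infty$, and any maximiser $v_{\crossEntropySymbol}$ must also satisfy $\cip{\proj{v_{\crossEntropySymbol}}{\pdfn},\proj{v_{\crossEntropySymbol}}{\pdfp}}=0$. Non-negativity of the two projected densities then forces their supports to intersect only on a Lebesgue-null set, so almost every $x$ lies in the support of at most one of them, and the density-comparison rule $\text{sign}(\llbracket\proj{v_{\crossEntropySymbol}}{X_+}\rrbracket(x)-\llbracket\proj{v_{\crossEntropySymbol}}{X_-}\rrbracket(x))$ returns the correct label almost everywhere, giving expected averaged accuracy error $0=\mathcal{R}_\text{EAA}(\pdfn,\pdfp)$.

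The step I expect to be the main obstacle is the passage to the limit. For any finite sample the Gaussian-kernel KDE has full support, so $\cip{\cdot,\cdot}>0$ and $\crossEntropy{\cdot,\cdot}$ is finite on the whole unit sphere; the ``attains $+\infty$'' argument only makes sense asymptotically. A fully rigorous treatment would combine the KDE-convergence guarantee with compactness of $\{v:\|v\|=1\}$ and (upper semi-)continuity of $v\mapsto\crossEntropy{\proj{v}{\pdfn},\proj{v}{\pdfp}}$ in the density arguments to show that every accumulation point $\hat v$ of the sequence of empirical maximisers $v^{(N)}$ lies in the set of separating directions (i.e.\ satisfies $\cip{\proj{\hat v}{\pdfn},\proj{\hat v}{\pdfp}}=0$), whence $\mathcal{R}_\text{EAA}(v^{(N)};\pdfn,\pdfp)\to 0$ as $N\to\infty$.
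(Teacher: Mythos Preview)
Your proposal is correct and follows essentially the same route as the paper: exhibit a separating direction $w$ on which the projected supports are disjoint up to a null set, conclude $\cip{\proj{w}{\pdfn},\proj{w}{\pdfp}}=0$, and then argue the converse implication (any $v$ with $\cipSymbol=0$ has essentially disjoint projected supports, hence zero error). The paper's own proof is exactly these two implications, stated tersely and without your additional discussion of the KDE limit; it simply invokes the standing assumption (made a few paragraphs earlier) that the density estimate \emph{is} the true density, so the issue you flag as ``the main obstacle'' is deliberately set aside rather than addressed.
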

\begin{proof}
If two distributions are perfectly separable by a multithreshold linear separator then there exists a linear projection $v_{opt}$ such that common support of distributions projected on $v_{opt} $ has zero measure.
$$
| \supp(v_{opt}^T f_-) \cap \supp(v_{opt}^T f_+) | = 0.
$$
Obviously, $\cip{v_{opt}^T f_-, v_{opt}^T f_+} = 0$ as we integrate the function which is not equal to 0 only on the set o zero measure. 

Similarly $\forall v : \cip{v^T f_-, v^T f_+} = 0 \rightarrow | \supp(v^T f_-) \cap \supp(v^T f_+) | = 0 $ because if the integral of the product of two functions is equal to zero then only on the set of zero measure both of these functions can be non-zero. As a result the solution given by non-regularized MELC attains the Bayesian risk for this class of distributions.
\end{proof}
% , which can be summarized in the following observation.   
%\paragraph{Radial normal distributions}
Let us now investigate the situation when data of each class come from a radial normal distributions. 
\begin{observation}
Non regularized MELC is consistent with 0/1 loss on radial normal distributions.
\end{observation}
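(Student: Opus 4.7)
The plan is to reduce both MELC's projection objective and the EAA-optimal projection (as characterized by the preceding proposition) to a single scalar quantity, show that they coincide, and then argue that the density-based decision rule on that optimal projection is itself Bayes-optimal in one dimension. Write $\pdfn = \mathcal{N}(\mu_-, \sigma_-^2 I)$ and $\pdfp = \mathcal{N}(\mu_+, \sigma_+^2 I)$. For any unit $v$ the projected densities $\proj{v}{\pdfn}$ and $\proj{v}{\pdfp}$ are one-dimensional Gaussians with means $v^T\mu_-,\, v^T\mu_+$ and variances $\sigma_-^2,\,\sigma_+^2$, since isotropy gives $v^T(\sigma_\pm^2 I)v = \sigma_\pm^2$ independently of $v$. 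The only $v$-dependent quantity is therefore $\Delta(v) := v^T(\mu_+ - \mu_-)$.

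For the MELC side, the Gaussian product identity yields
\[
\cip{\proj{v}{\pdfn}, \proj{v}{\pdfp}} = \frac{1}{\sqrt{2\pi(\sigma_-^2+\sigma_+^2)}} \exp\!\left( -\frac{\Delta(v)^2}{2(\sigma_-^2+\sigma_+^2)}\right),
\]
so $\crossEntropy{\proj{v}{\pdfn}, \proj{v}{\pdfp}}$ differs from a $v$-independent constant by $\Delta(v)^2/(2(\sigma_-^2+\sigma_+^2))$. Maximizing it over $\|v\|=1$ therefore reduces to maximizing $\Delta(v)^2$, which Cauchy--Schwarz solves at $v^\star = \pm(\mu_+-\mu_-)/\|\mu_+-\mu_-\|$.

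For the Bayes side, by the preceding proposition the EAA-optimal projection minimizes $\int \min\{\proj{v}{\pdfn}(x), \proj{v}{\pdfp}(x)\}\, dx$. Because the projected variances do not depend on $v$, this overlap integral depends on $v$ only through $|\Delta(v)|$ by translation invariance. A monotonicity argument -- in the equal-variance case the overlap equals $2\Phi(-|\Delta|/\sqrt{\sigma_-^2+\sigma_+^2})$ outright; in the general case one splits the real line at the (at most two) points where the two density curves intersect and differentiates the resulting closed form with respect to $|\Delta|$ -- shows that the overlap is strictly decreasing in $|\Delta(v)|$. Hence the EAA-optimal projection also maximizes $|\Delta(v)|$, yielding the same direction $v^\star$.

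Finally, the kernel density estimate using Silverman's bandwidth converges to the true projected density as the sample size tends to infinity, so MELC's decision rule $\text{sign}(\llbracket\proj{v^\star}{X_+}\rrbracket(x) - \llbracket\proj{v^\star}{X_-}\rrbracket(x))$ converges to the pointwise comparison of two 1D Gaussian densities along $v^\star$. This is exactly the EAA-Bayes classifier on that projection, since neither definition weights by priors, and it therefore attains $\mathcal{R}_\text{EAA}(\pdfn,\pdfp)$. I expect the main obstacle to be the strict monotonicity of the overlap in $|\Delta(v)|$ when $\sigma_-\neq\sigma_+$, because the two intersection points of the density curves shift nontrivially with $\Delta$; the remaining steps are a Gaussian integral, Cauchy--Schwarz, and an appeal to consistency of the Silverman kernel density estimator.
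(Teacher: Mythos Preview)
Your approach is essentially the same as the paper's: both reduce MELC's objective and the EAA-optimal projection to maximizing the separation $|\Delta(v)|$ of the projected means, invoking the same Gaussian-product identity for $\cipSymbol$. Your version is in fact more careful---the paper simply asserts that the optimal projection maximizes the mean separation without justifying the monotonicity of the overlap integral, and it omits the decision-rule and KDE-consistency steps that you include.
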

\begin{proof}
Let us assume that we are given Gaussians with variances $\sigma_-^2$ and $\sigma_+^2$ respectively.
$$\pdfn = \mathcal{N}(m_-, \sigma_-^2 I), \pdfp = \mathcal{N}(m_+, \sigma_+^2 I)$$
It is easy to see that linear projections of these distributions form the family of one-dimensional normal distributions with variances $\sigma_-^2, \sigma_+^2$ respectively and distance between their means in the $[0, \| \proj{v}{m_-} - \proj{v}{m_+}\|]$ interval. Optimal projection is given by $v_{opt}$ which maximizes the distance between these means, so $v_{opt}= \pm( m_- - m_+ ) $.

On the other hand according to Czarnecki et al.~\cite{czarnecki2014multithreshold}, we have
$$
\cip{v^Tf_-, v^Tf_+} = \tfrac{1}{\sqrt{2\pi(\sigma_-^2+\sigma_+^2 )}} \exp\left (-\frac{\|v^Tm_--v^Tm_+\|^2}{2(\sigma_-^2+\sigma_+^2)} \right ),
$$
so obviously $\cipSymbol$ is minimized (and $\crossEntropySymbol$ maximized) when $\| v^Tm_- - v^Tm_+ \|^2$ is maximized. As a result non-regularized MELC selects optimal linear projection. 
\end{proof}
% This leads to another observation regarding consistency of analyzed model.

%\paragraph{arbitrary distributions}

Unfortunately MELC (neither regularized nor non-regularized) does not seem to be consistent with 0/1 loss in general. However, we show that 0/1 loss is nicely bounded by its objective function which will draw an analogy between this approach and those taken by other linear models.

We start with a simple lemma connecting square of the function's integral and integral of the function's square on a bounded interval.
\begin{lemma}
\label{lem:cauchy}
For any square integrable function $f$ such that $\forall x : f(x) \geq 0$
 $$
 \int_{0}^{1} f(x) dx  \leq \sqrt{ \int_{0}^{1} f^2(x)  dx}.
 $$
\end{lemma}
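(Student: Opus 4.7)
The lemma is exactly the Cauchy--Schwarz inequality in $L^2([0,1])$ applied to the pair $(f,\mathbf{1})$, where $\mathbf{1}$ denotes the constant function equal to $1$ on the unit interval. So the plan is to use Cauchy--Schwarz in its integral form,
$$
\left(\int_0^1 f(x) g(x)\, dx\right)^{\!2} \leq \left(\int_0^1 f^2(x)\, dx\right)\!\left(\int_0^1 g^2(x)\, dx\right),
$$
and simply specialize to $g \equiv 1$.

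First I would note that $f$ is square integrable and $\mathbf{1}$ is trivially square integrable on $[0,1]$ with $\int_0^1 1\, dx = 1$, so Cauchy--Schwarz is applicable. Substituting yields
$$
\left(\int_0^1 f(x)\, dx\right)^{\!2} \leq \left(\int_0^1 f^2(x)\, dx\right)\cdot 1.
$$
Then I would use the hypothesis $f(x)\geq 0$ to conclude that $\int_0^1 f(x)\,dx \geq 0$, so that taking the (non-negative) square root of both sides preserves the inequality and gives the desired bound
$$
\int_0^1 f(x)\, dx \leq \sqrt{\int_0^1 f^2(x)\, dx}.
$$

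There is essentially no obstacle here; the only thing worth being careful about is that the square-root step requires the left-hand side to be non-negative before taking roots, which is why the hypothesis $f\geq 0$ (rather than just square integrability) is invoked. Without non-negativity the inequality would only control the absolute value $\bigl|\int_0^1 f\bigr|$ rather than $\int_0^1 f$ itself. The restriction to the unit interval plays the role of making the $L^2$-norm of the constant function equal to $1$, which is what erases the factor that would otherwise appear on the right-hand side; on a general interval $[a,b]$ the analogous bound would carry an extra $\sqrt{b-a}$.
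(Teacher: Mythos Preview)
Your proof is correct and is essentially the same as the paper's: both apply the Cauchy--Schwarz (Schwarz) inequality on $[0,1]$ with $g$ a constant function and then use $f\ge 0$ to pass to the square root. The only cosmetic difference is that the paper writes $g\equiv c>0$ and cancels the $c$, whereas you take $c=1$ directly.
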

\begin{proof}
 This is an obvious consequence of Schwarz inequality
 $$
 \left ( \int_a^b f(x)g(x) dx  \right )^2 \leq { \int_a^b f^2(x) dx \int_a^b g^2(x) dx },
 $$
for $a=0, b=1$, $f$ being non-negative and $g$ being a constant function equal $c>0$,
 $$
 \int_{0}^{1} f(x) dx = \frac{1}{c} \int_{0}^{1} c \cdot f(x) dx  \leq \frac{1}{c} \sqrt{ \int_{0}^{1} c^2 dx\int_{0}^{1} f^2(x) dx  } = \sqrt{ \int_{0}^{1} f^2(x) dx }.
 $$
\end{proof}
Now we can prove the main theorem of this paper.
\begin{theorem}
%Bayes risk of the 0/1 loss in the family of multithreshold linear classifiers is upper bounded by the square root of $\cipSymbol$.

%Negative log likelihood of minimal obtainable missclasification error in the class of multithreshold linear classifiers is bounded by Renyi's quadratic cross entropy obtained by this optimal classifier.

Negative log likelihood of minimal obtainable missclassification error of a given multithreshold linear classifier for any not multithreshold linearly separable distributions is at least half of Renyi's quadratic cross entropy of data projections used by this classifier.
\end{theorem}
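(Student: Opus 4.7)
The plan is to exponentiate the claimed inequality and instead prove the equivalent statement
$$
\mathcal{R}_\text{EAA}(v; \pdfn, \pdfp) \leq \sqrt{\cip{\proj{v}{\pdfn}, \proj{v}{\pdfp}}},
$$
since every tool available at this point in the paper -- Proposition~1, the definitions of $\cipSymbol$ and $\crossEntropySymbol$, and Lemma~\ref{lem:cauchy} -- lives naturally on the exponential scale. Applying $-\ln$ to both sides of this reformulation and rearranging then delivers the stated bound on the negative log likelihood.

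For the substantive content I would chain two inequalities. First, the elementary pointwise fact $\min\{a,b\}^2 \leq a b$ for non-negative reals $a, b$, applied with $a = \proj{v}{\pdfn}(x)$ and $b = \proj{v}{\pdfp}(x)$ and integrated over $x$, gives
$$
\int \min\{\proj{v}{\pdfn}(x), \proj{v}{\pdfp}(x)\}^2 \, dx \leq \int \proj{v}{\pdfn}(x)\,\proj{v}{\pdfp}(x)\, dx = \cip{\proj{v}{\pdfn}, \proj{v}{\pdfp}}.
$$
Second, setting $g(x) := \min\{\proj{v}{\pdfn}(x), \proj{v}{\pdfp}(x)\}$ and applying Lemma~\ref{lem:cauchy} (an $L^1$-to-$L^2$ Cauchy--Schwarz-type bound) to the non-negative function $g$ produces $\int g \, dx \leq \sqrt{\int g^2 \, dx}$. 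Combining the two inequalities and invoking Proposition~1 to identify $\int g\, dx$ with $\mathcal{R}_\text{EAA}(v;\pdfn,\pdfp)$ then closes the reformulated claim, after which taking $-\ln$ of both sides yields the theorem.

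The main obstacle is that Lemma~\ref{lem:cauchy} is stated for functions on the unit interval $[0,1]$, whereas the projected densities here are supported on all of $\mathbb{R}$. I would need to argue that the lemma -- or more precisely the Cauchy--Schwarz step $(\int_A g\,dx)^2 \leq |A|\int_A g^2\,dx$ underlying it -- can be applied to a set of unit measure containing the overlap region where $g$ is non-zero. The ``not multithreshold linearly separable'' hypothesis is what guarantees that this overlap is non-trivial, so in particular $\mathcal{R}_\text{EAA}(v;\pdfn,\pdfp) > 0$ and the negative log likelihood on the left-hand side of the theorem is well-defined and finite. Apart from resolving this normalization subtlety, every remaining step in the chain is elementary, and the resulting inequality $\mathcal{R}_\text{EAA}(v;\pdfn,\pdfp) \leq \sqrt{\cip{\proj{v}{\pdfn}, \proj{v}{\pdfp}}}$ is the exact analogue -- on the density side -- of the ``hinge loss dominates 0/1 loss'' picture that the introduction draws for SVMs.
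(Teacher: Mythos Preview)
Your proposal is correct and follows essentially the same route as the paper: both arguments chain the pointwise bound $\min\{a,b\}^2 \leq ab$ with the $L^1$--$L^2$ Cauchy--Schwarz inequality of Lemma~\ref{lem:cauchy}, then take $-\ln$ using the non-separability hypothesis to guarantee strict positivity. The normalization obstacle you flag is resolved in the paper exactly as you anticipate, by scaling and centering so that the union of the projected supports lies in $[0,1]$ (with a remark that for infinitely supported kernels the mass outside $[0,1]$ vanishes asymptotically).
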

%\begin{theorem}
%EAA Bayes risk is upper bounded by a square root of $\cipSymbol$.
%\end{theorem}
\begin{proof}
 First from the fact that we can scale/center data so for any linear operator $v$ such that $\|v\|=1$ we have
$$
0 \leq \sup( \supp(\proj{v}{\pdfn})  \cup  \supp(\proj{v}{\pdfp}) ) - \inf( \supp(\proj{v}{\pdfn})  \cup  \supp(\proj{v}{\pdfp}) ) \leq 1,
$$
 and consequently we can narrow down to the error over a unit interval\footnote{for KDE based on functions with infinite support, for a proper scaling, integral of the pdf outside $[0,1]$ interval goes to $0$ with samples size growing to infinity}
 . From Lemma~\ref{lem:cauchy} we get
 \begin{equation}
 \int_{0}^{1} \min\{(\proj{v}{\pdfn})(x), (\proj{v}{\pdfp})(x)\} dx  \leq \sqrt{ \int_{0}^{1} (\min\{(\proj{v}{\pdfn})(x), (\proj{v}{\pdfp})(x)\})^2 dx }.
 \label{th:1part1}
 \end{equation}
 For any ${a,b \in \mathbb{R}_+}$ we have $\min \{a,b\} \leq \sqrt{ab}$, thus
 $$
 \min\{(\proj{v}{\pdfn})(x), (\proj{v}{\pdfp})(x)\}  \leq  \sqrt{(\proj{v}{\pdfn})(x)(\proj{v}{\pdfp})(x)},
 $$
  which connected with (\ref{th:1part1}) yields
%  $$
%  \left (\int_{0}^{1} \min\{(\proj{v}{\pdfn})(x), (\proj{v}{\pdfp})(x)\} dx \right )^2 \leq \int_{0}^{1} (\proj{v}{\pdfn})(x)(\proj{v}{\pdfp})(x) dx
%  $$
 $$
 \mathcal{R}_\text{EAA}(v; \pdfn, \pdfp) = \int_{0}^{1} \min\{(\proj{v}{\pdfn})(x), (\proj{v}{\pdfp})(x)\} dx \leq \sqrt{ \int_{0}^{1} (\proj{v}{\pdfn})(x)(\proj{v}{\pdfp})(x) dx },
 $$
consequently, as $\pdfn, \pdfp$ are not multithreshold linearly separable, $\mathcal{R}_\text{EAA}(v; \pdfn, \pdfp)$ is strictly positive, thus
$$
- \ln( \mathcal{R}_\text{EAA}(v; \pdfn, \pdfp) ) \geq - \ln \left ( \sqrt{ \int_{0}^{1} (\proj{v}{\pdfn})(x)(\proj{v}{\pdfp})(x) dx } \right ) = \tfrac{1}{2} \crossEntropy{\proj{v}{\pdfn}, \proj{v}{\pdfp}}.
$$
\end{proof}
In other words by maximizing the Renyi's quadratic cross entropy (minimizing the cross information potential) we should also optimize negative log likelihood of correct classification (get close to the Bayes risk of 0/1 error). It is worth noting that we do not assume any particular kernel so even though MELC is defined with Gaussian mixtures kernel density estimation, the theorems holds for any square integrable distributions on $[0,1]$ interval.

%Fig. 1 draws an analogy between loss functions upper bounding the 0/1 loss and the $\cipSymbol$ relation to the 0/1 loss in the class of considered models.
%[?OBRAZEK?]
% 
% \begin{table}[htb]
% \begin{tabular}{llll}
%   & $|\supp(p_-) \cap \supp(p_+)| = 0 $  & $0<r<1$ & $p_- = p_+$ \\ \hline
% $h=\crossEntropy{p_-,p_+}$ & $-\infty$ & $h \geq -2 \ln(r)$   & $- 2 \ln ( \int p_-^2(x) dx)$ \\
% $c=\cip{p_-,p_+}$ & 0 & $r \leq c$ & $\int p_-^2(x) dx$ \\
% $r=\min\{ p_-, p_+ \}$ & 0 & $r$ & 1 
% \end{tabular}
% \end{table}

% For further intuition we sample 200 samples from two 2d GMMs (see Fig. 1) and plotted 0/1 loss and hinge loss for best linear classifiers for each possible linear projection as well as their 0/1 loss and $\cipSymbol$ for multithreshold classifier.

% \begin{table}
% \begin{tabular}{lrrrrrr}
% dataset & 0/1 & hinge & $\|v_{0/1}-v_H\|^2$ & 0/1 &$\cipSymbol$ & $\|v_{0/1}-v_{\cipSymbol}\|^2$ \\
% \hline
% 2 Gaussians & \\
% \end{tabular}
% \end{table}

\begin{figure}[!Hb]
\centering
 
 \includegraphics[width=2.7cm]{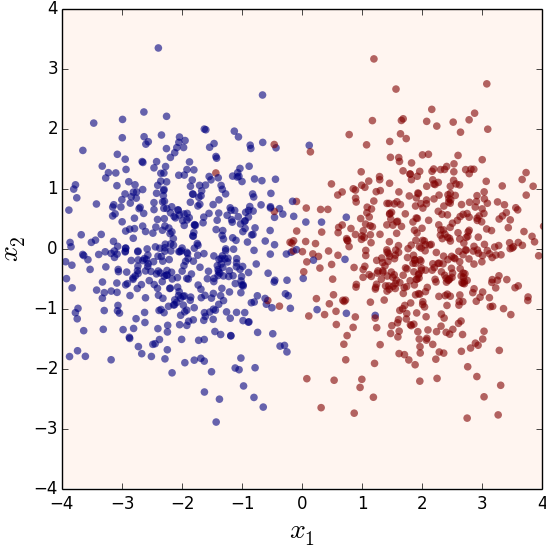}
 \includegraphics[width=3.2cm]{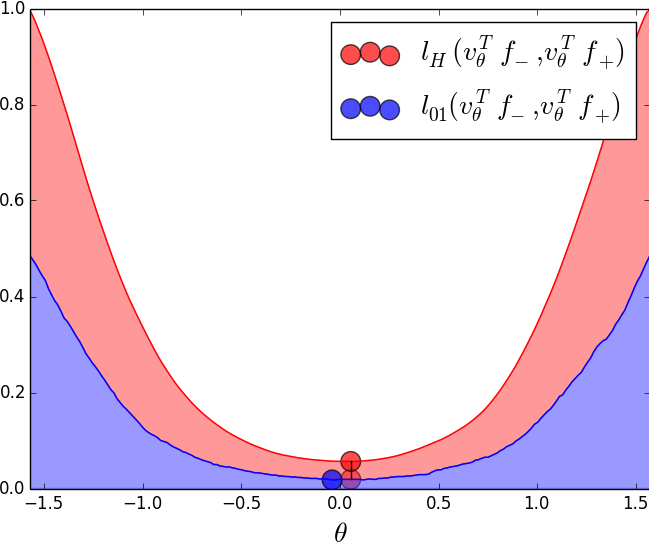}
 \includegraphics[width=2.7cm]{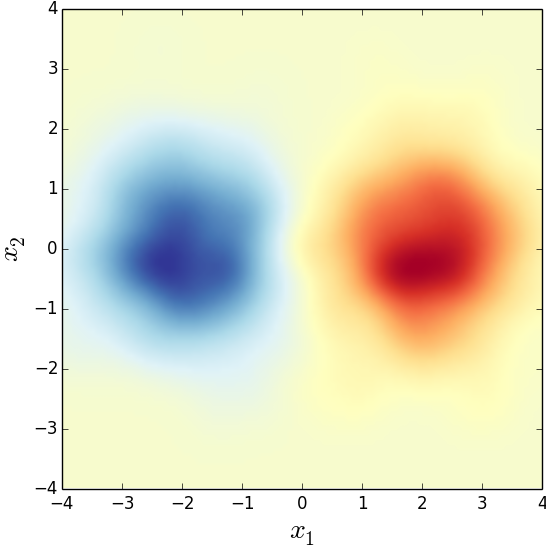}
 \includegraphics[width=3.2cm]{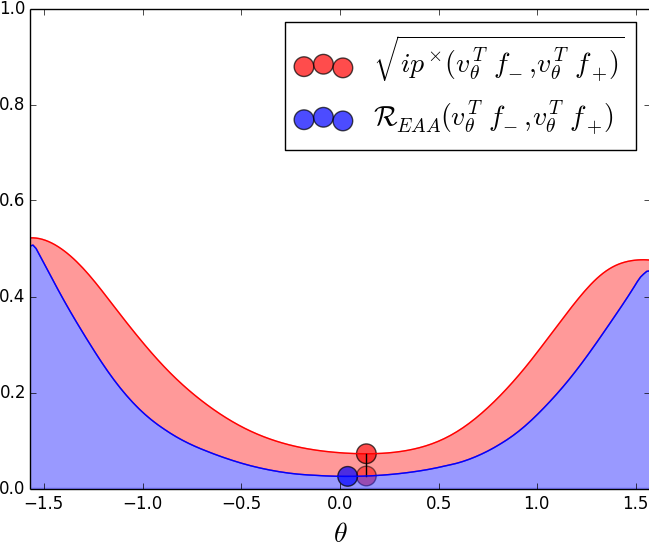} \\
 
 \includegraphics[width=2.7cm]{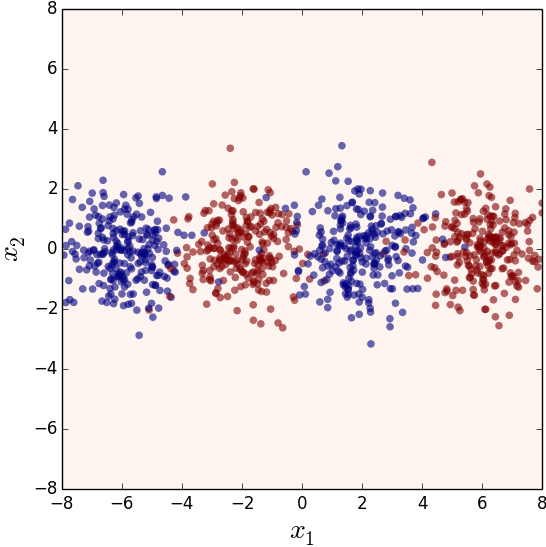}
 \includegraphics[width=3.2cm]{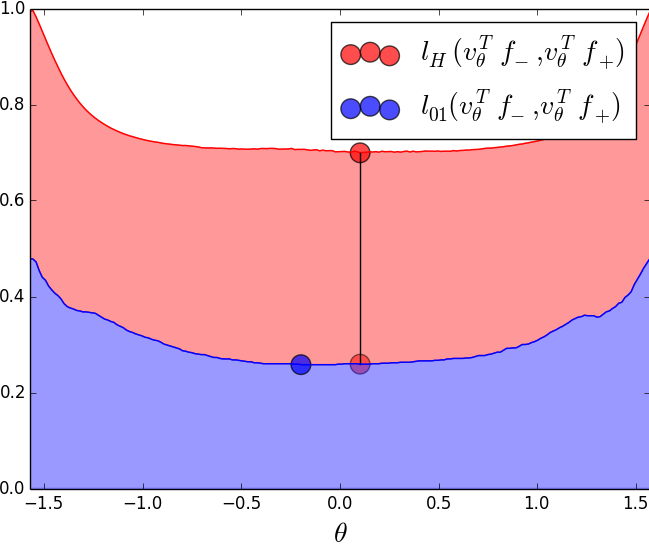}
 \includegraphics[width=2.7cm]{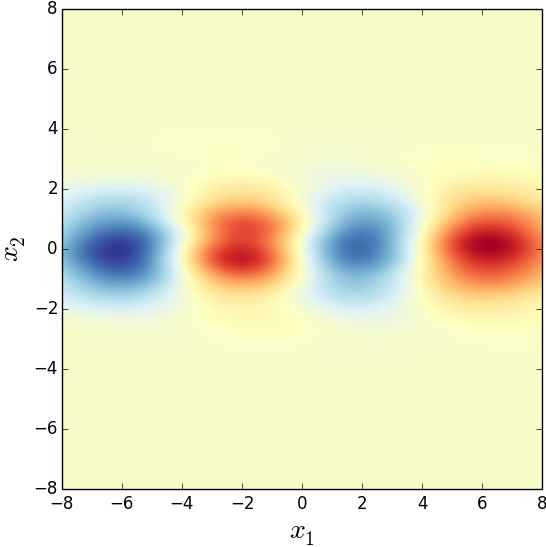}
 \includegraphics[width=3.2cm]{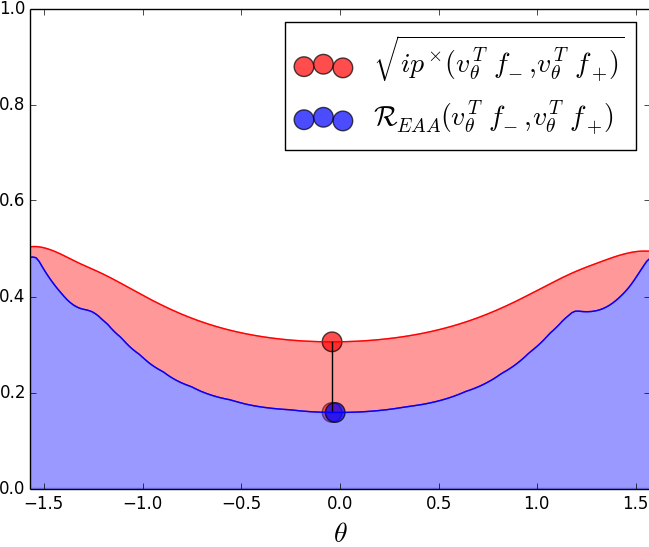} \\
 
 \includegraphics[width=2.7cm]{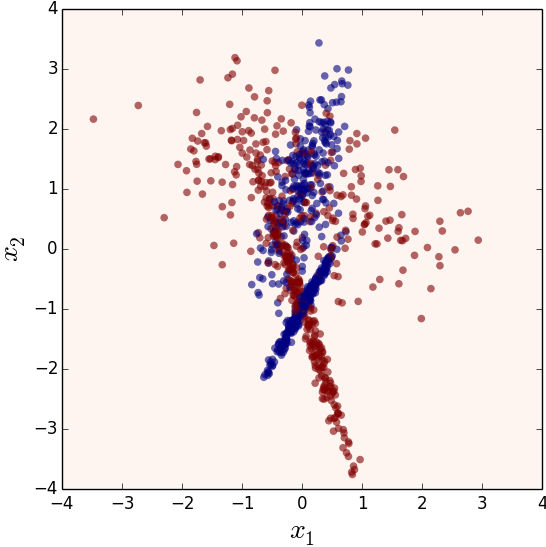}
 \includegraphics[width=3.2cm]{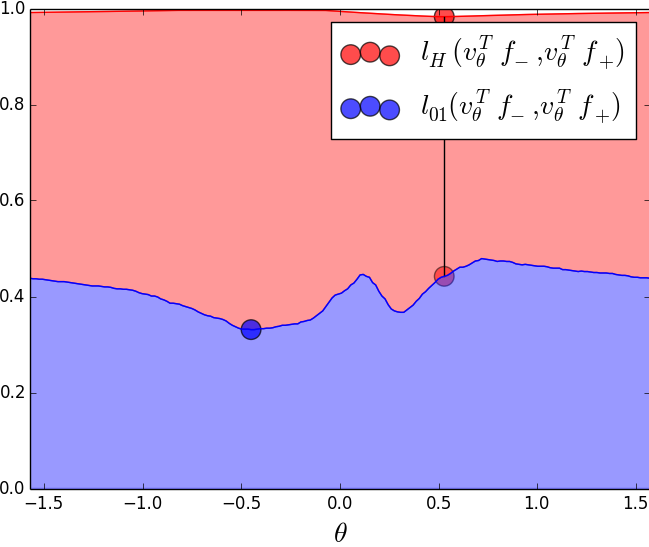}
 \includegraphics[width=2.7cm]{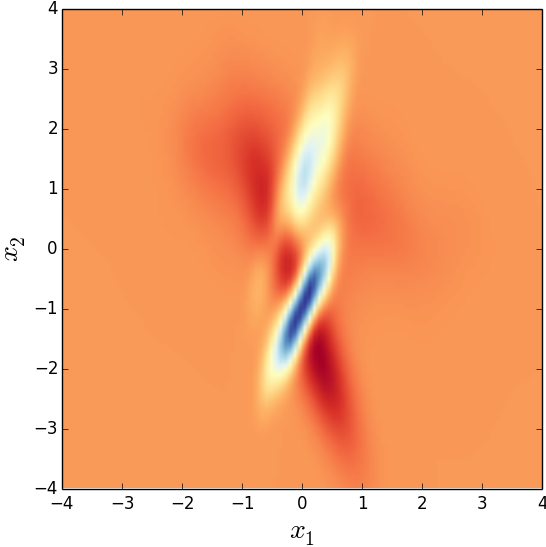}
 \includegraphics[width=3.2cm]{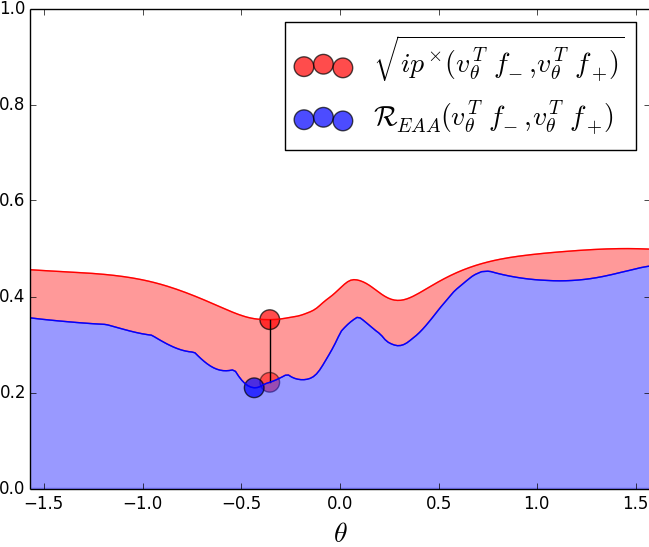} \\

 \includegraphics[width=2.7cm]{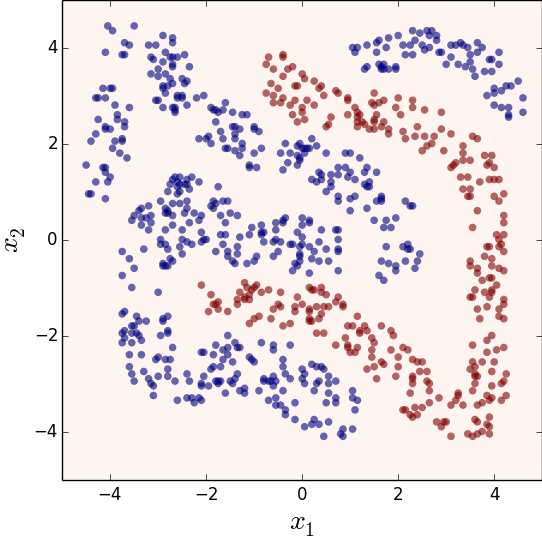}
 \includegraphics[width=3.2cm]{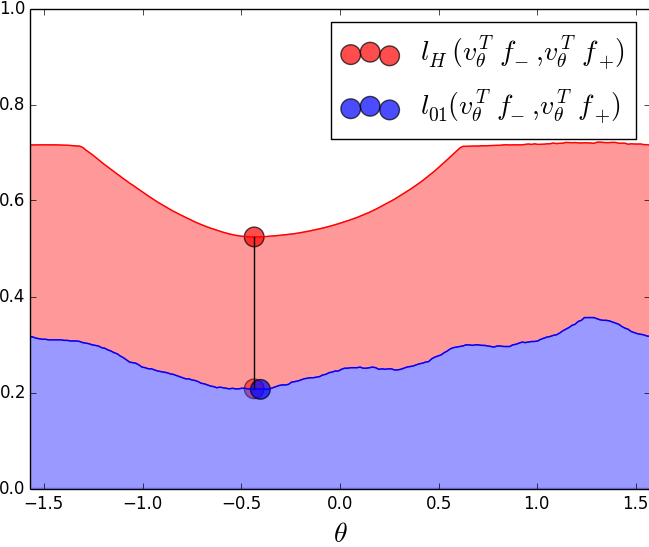}
 \includegraphics[width=2.7cm]{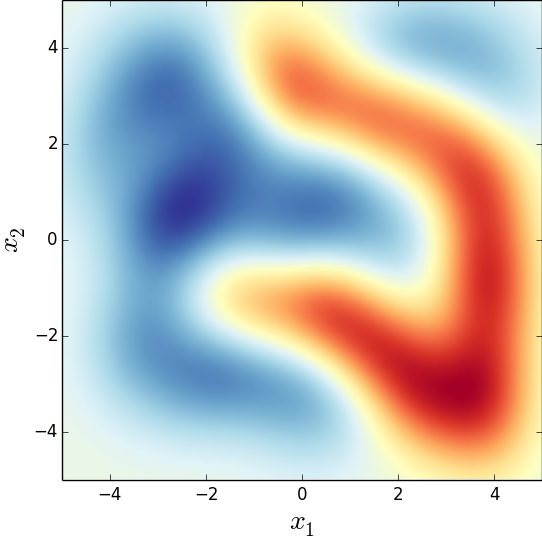}
 \includegraphics[width=3.2cm]{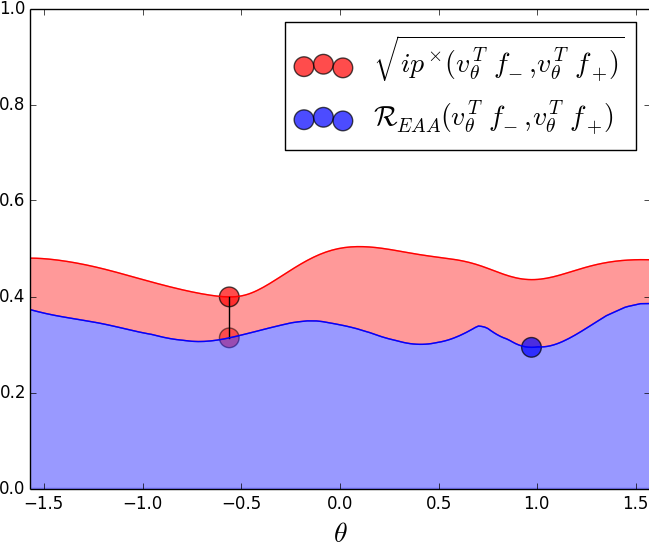} \\

 \includegraphics[width=2.7cm]{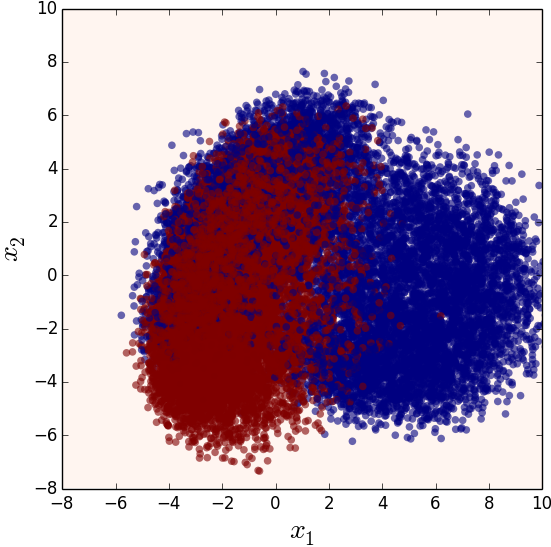}
 \includegraphics[width=3.2cm]{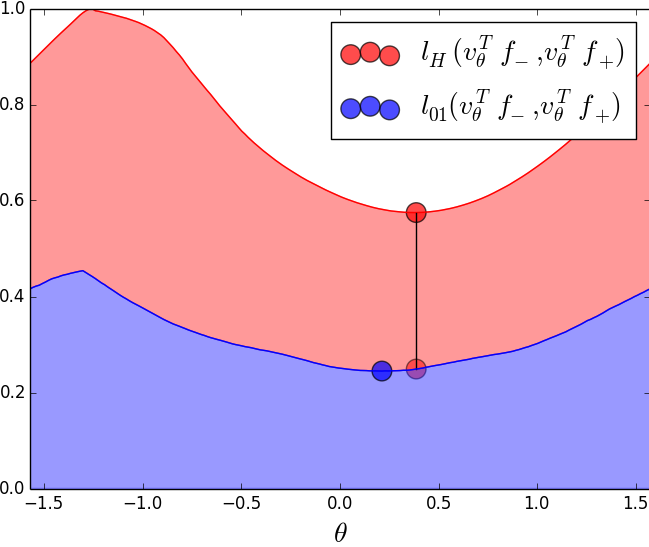}
 \includegraphics[width=2.7cm]{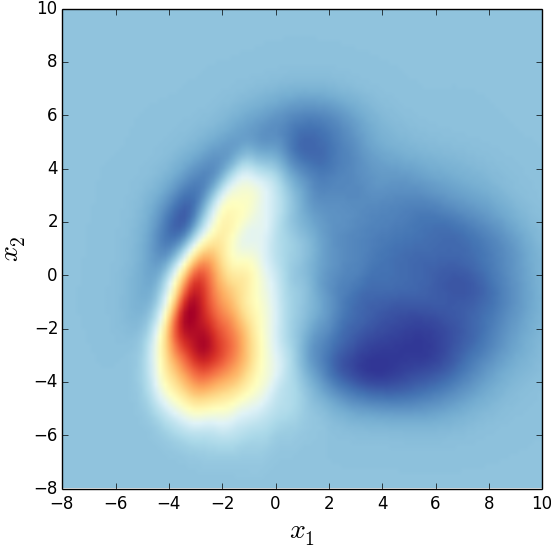}
 \includegraphics[width=3.2cm]{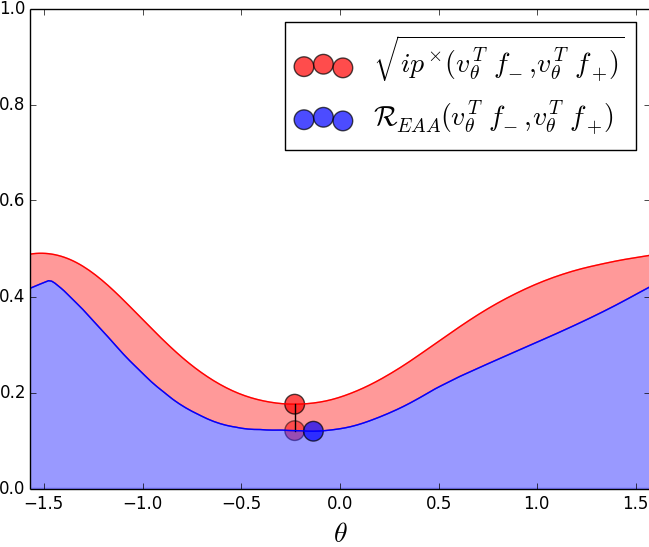} \\

 \caption{Visualization of sampled points for each dataset (first column), hinge loss and Bayesian risk of linear models (second column), underlying dataset distribution (third column) and finally square root of the cross information potential and the Bayesian risk of multithreshold models (last column). X axis corresponds to the angle of the $v$ vector. Large dots correspond to minima of each function, additionally for both hinge loss and $\sqrt{\cipSymbol}$ there is another dot denoting the value of true error obtained if solution is selected using these objectives.}
 \label{fig:obj}
\end{figure}

\section{Experiments}

To further confirm our claims we perform simple numerical experiments on five datasets, three of which are synthetic ones and two real life examples. During this evaluation we analyze all possible linear models in two-dimensional space and compare how particular upper bound objective (hinge loss in the case of linear classifiers and non-regularized MELC for multithreshold classifiers) behaves as compared to the Bayesian risk. Figure~\ref{fig:obj} visualizes the results for: two radial Gaussians distributions (one per class) in 2d space;  four radial Gaussians distributions placed alternately (two per class) in a line; four random strongly overlapping Gaussian distributions (two per class); fourclass dataset~\cite{ho1996building}; 2d PCA embedding of the images of 0 and 2s (positive class) and 3s and 8s from MNIST dataset~\cite{lecun1998mnist}.

First, it is easy to notice the convexity of the hinge loss objective function. Even for problems having multiple local optima (like fourth dataset) the SVM objective function has just one, global optimum which is the core advantage of such an approach. In the same time, non-regularized MELC function has similar number of local optima like the Bayesian risk function, however it is much smoother and as a result one of the unimportant local solution in terms of 0/1 loss in the fourth example (located near $0.5$) is not a solution of MELC.

On the other hand for datasets where the considered class of models is not sufficient (like third problem for linear model) hinge loss convex upper bounds leads to the selection of the point distant from the true optimum (see Table~\ref{tab:comp}). MELC on the other hand seems to better approximate the underlying Bayesian risk function and results in the solutions with comparable error (even if the solution itself is far away from the true optimum, like in the case of fourth dataset).

\begin{table}[htb]
\begin{tabular}{lrrrr}
\toprule
dataset  & $\mathrm{E}(v_{H}, l_{0/1})$ & $\cos(v_{H}, v_{0/1})$  & $\mathrm{E}(v_{\cipSymbol}, \mathcal{R}_\text{EAA})$& $\cos(v_{\cipSymbol}, v_{\mathcal{R}_\text{EAA}})$\\  
\midrule
2 Gauss 2d &
6\% &
  1.00

&
3\% &
  1.00

\\

4 Gauss in line &
0\% &
  0.96

&
0\% &
  1.00

\\

4 Gauss mixed &
34\% &
  0.56

&
5\% &
  1.00

\\

fourclass &
1\% &
  1.00

&
7\% &
  0.05

\\

MNIST &
2\% &
  0.99

&
1\% &
  1.00

\\

\bottomrule
\end{tabular}
\caption{Comparison of solutions given by optimization of hinge loss and optimal linear classifier and between non-regularized MELC and optimal multithreshold linear classifier. Error function is the relative increase in the corresponding error measure when using a particular optimization scheme $\mathrm{E}(m, f) = \tfrac{f(m)-\min_v f(v)}{\min_v f(v)} $. $v_H$ is a linear projection given by hinge loss optimization, $v_{0/1}$ by 0/1 loss optimization, $v_{\cipSymbol}$ by non-regularized MELC and $v_{\mathcal{R}_\text{EAA}}$ the optimal multithreshold linear projection in the Bayesian sense.}
\label{tab:comp}
\end{table}

\section{Conclusions}

In this paper Multithreshold Entropy Linear Classifier is analyzed in terms of its consistency with 0/1 loss function in the class of multithreshold linear classifiers. It has been shown that it is truly consistent with some simple distribution classes and that in general its objective function upper bounds the 0/1 loss in a similar manner as hinge or square losses upper bounds 0/1 loss. Experiments on the synthetic, low dimensional data showed that in practise, one can expect that optimization of MELC objective function truly leads to the nearly optimal classifier with sample size growing to infinity.  
% 
% \section{Acknowledgments}
% 
% Author woud like to thank Jacek Tabor, Phd for guidance and better understanding of the MELC model.

\bibliographystyle{plain}
\bibliography{biblio}

\end{document}